
\typeout{IJCAI--24 Instructions for Authors}


\documentclass{article}
\pdfpagewidth=8.5in
\pdfpageheight=11in

\usepackage{ijcai24}

\usepackage{times}
\usepackage{soul}
\usepackage{url}
\usepackage[hidelinks]{hyperref}
\usepackage[utf8]{inputenc}
\usepackage[small]{caption}
\usepackage{graphicx}
\usepackage{amsmath}
\usepackage{amsthm}
\usepackage{booktabs}
\usepackage{algorithm}
\usepackage{algorithmic}
\usepackage[switch]{lineno}
\usepackage{subcaption}
\usepackage{amsfonts} 
\usepackage{comment}


\urlstyle{same}



\newtheorem{theorem}{Theorem}





\pdfinfo{
/TemplateVersion (IJCAI.2024.0)
}


\title{Privacy-Preserving UCB Decision Process Verification via zk-SNARKs}


\author{
Xikun Jiang$^{1,3}$
\and
He Lyu$^3$\and
Chenhao Ying$^{1,2}$\footnote{Corresponding author}\and
Yibin Xu$^3$\and
Boris D{\"u}dder$^3$\and
Yuan Luo$^{1,2}$\\
\affiliations
$^1$Department of Computer Science, Shanghai Jiao Tong University, China\\
$^2$Shanghai Jiao Tong University (Wuxi) Blockchain Advanced Research Center, China\\
$^3$Department of Computer Science, University of Copenhagen, Denmark\\
\emails
\{xikunjiang, yingchenhao,yuanluo\}@sjtu.edu.cn,
lyuhe2000@gmail.com,
\{xikun, yx, boris.d\}@di.ku.dk}

\begin{document}

\maketitle

\begin{abstract}
With the increasingly widespread application of machine learning, how to strike a balance between protecting the privacy of data and algorithm parameters and ensuring the verifiability of machine learning has always been a challenge. This study explores the intersection of reinforcement learning and data privacy, specifically addressing the Multi-Armed Bandit (MAB) problem with the Upper Confidence Bound (UCB) algorithm. We introduce zkUCB, an innovative algorithm that employs the Zero-Knowledge Succinct Non-Interactive Argument of Knowledge (zk-SNARKs) to enhance UCB. zkUCB is carefully designed to safeguard the confidentiality of training data and algorithmic parameters, ensuring transparent UCB decision-making.

Experiments highlight zkUCB's superior performance, attributing its enhanced reward to judicious quantization bit usage that reduces information entropy in the decision-making process. zkUCB's proof size and verification time scale linearly with the execution steps of zkUCB. This showcases zkUCB's adept balance between data security and operational efficiency. This approach contributes significantly to the ongoing discourse on reinforcing data privacy in complex decision-making processes, offering a promising solution for privacy-sensitive applications.

\end{abstract}

\section{Introduction}

The integration of reinforcement learning (RL) algorithms in various fields, such as healthcare, autonomous driving, and recommendation systems, signifies a major advancement in artificial intelligence~\cite{yu2021reinforcement,afsar2022reinforcement,kiran2021deep}. The Multi-Armed Bandit (MAB) problem, a core RL model, is particularly notable for its efficient decision capabilities. The MAB model has found diverse applications, from optimizing healthcare treatments to enhancing digital user experiences~\cite{patil2021achieving,ameko2020offline,gao2022combination}.

However, the deployment of the MAB model in sensitive areas like healthcare involves significant privacy and security challenges~\cite{zhao2020differentially,azize2022privacy}. For instance, in the healthcare domain where AI systems using MAB models, such as AI doctors (AI giving diagnosis to the patients), are tasked with critical responsibilities like adjusting antidepressant dosages~\cite{aziz2021multi}. However, this application involves many issues. (1) Data Privacy Risks: It necessitates the long-term accumulation of extensive patient data, encompassing sensitive information like physical conditions and lifestyle habits, posing the risk of privacy breaches; (2) Confidentiality of Algorithm Parameters: The algorithm's parameters, refined through comprehensive experiments and continual enhancements, hold significant value for medical institutions. Therefore, ensuring the confidentiality of these parameters is imperative; (3) Verification by Third Parties: With the involvement of numerous third-party entities such as health insurance companies and government welfare agencies, there is a need to verify the application's execution accuracy. This verification must be conducted without compromising any private data or revealing sensitive algorithm parameters.

\begin{figure}
    \centering
    \includegraphics[width=0.45\textwidth]{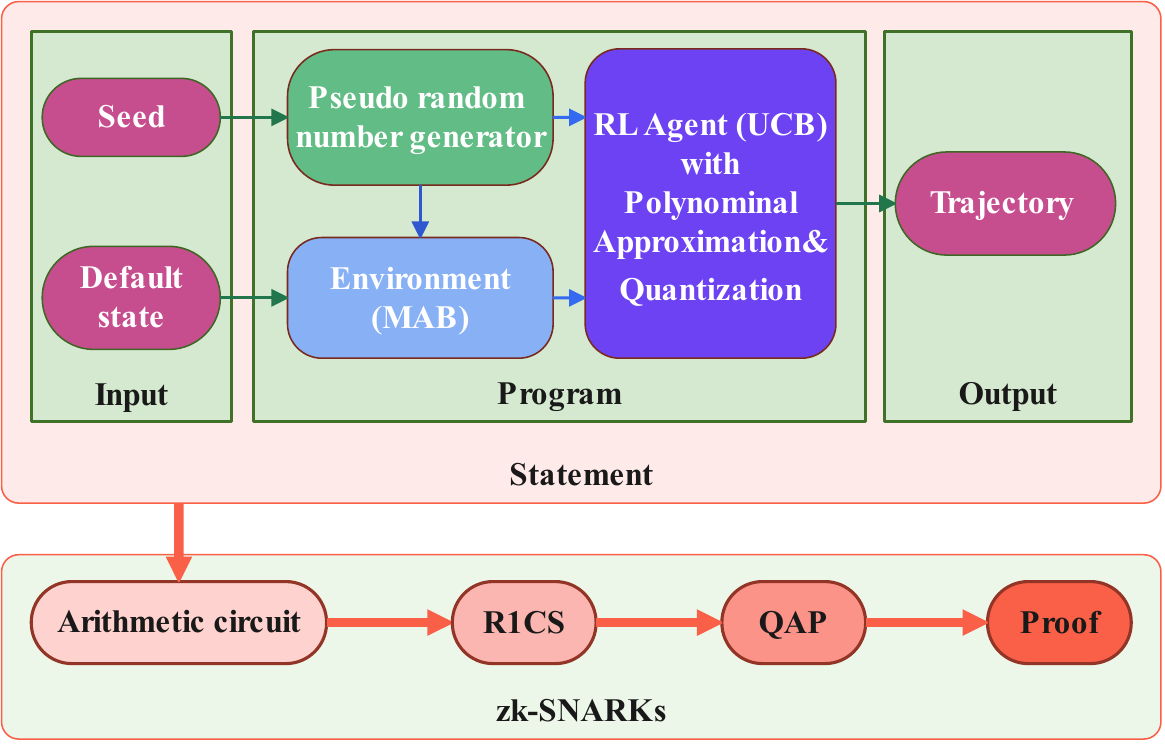}
    \caption{An illustration of zkUCB. We integrate zk-SNARK with the UBC algorithm. In zkUCB, the complete reinforcement learning process is encapsulated as a statement, which is used to model a deterministic arithmetic circuit that uses addition gates and multiplication gates for arithmetic calculations. The arithmetic circuit then serves as the first step in zk-SNARK execution.
    }
    \label{zkUCB}
\end{figure}

In this context, our study introduces an innovative solution by integrating the zero-knowledge succinct non-interactive arguments of knowledge (zk-SNARKs)~\cite{groth2016size,wahby2018doubly} with the UCB algorithm in the MAB model. This approach enables AI doctors to utilize algorithm parameters and real-time patient data for effective medical advice and medication planning, while simultaneously ensuring that no additional sensitive information is disclosed. To better understand the specific workflow of this application, we give a detailed example of integrating the UCB algorithm and zk-SNARKs and applying it to the AI doctor system. Please refer to Section~\ref{zkUCB_usecase} and Figure.~\ref{AI doctor}.

Generally speaking, applying zk-SNARKs involves converting the corresponding learning algorithm or program into a format compatible with zero-knowledge proofs. The conversion process involves first converting the algorithm or program into a deterministic arithmetic circuit and then converting that arithmetic circuit into a Rank-1 constrained system (R1CS), which is essentially a system of linear equations. These R1CS are then converted into Quadratic Arithmetic Programs (QAPs), thereby converting the linear equations into polynomial forms. QAP can then be used to generate the corresponding zk-SNARK proof, which is used to verify the output of the algorithm while protecting the confidentiality of the underlying data and the internal mechanisms of the algorithm.

\paragraph{Research Challenges.} 
Thus, to integrate the UCB algorithm with zk-SNARKs to ensure the data and parameters' confidentiality while supporting secure UCB decision process verification, we need to follow the above conversion process to convert the UCB algorithm into a format compatible with zero-knowledge proofs, thus facilitating efficient verification. However, this integration presents three major challenges: (1) The inherent randomness of the UCB complicates this task; the challenge is how to transform this randomness into the deterministic process necessary for constructing the arithmetic circuit; (2) Given that zk-SNARK relies on polynomials in finite fields Operations, how to convert the non-polynomial operations of the UCB (such as logarithms and non-integer powers) into polynomial operations is the second challenge. (3) To meet the finite field requirements of R1CS, how to adjust the floating point numbers in the UCB to meet the finite field requirements of R1CS is the third challenge.

To overcome these challenges, we carefully designed the conversion process to convert UCB into a format compatible with zero-knowledge proofs. As shown in Figure.~\ref{zkUCB}, we eliminate randomness by inputting seeds into the pseudo-random number generator and solve the second and third challenges respectively through polynomial approximation and quantization operations. Details on how to solve the challenge can be found in Section.~\ref{overcome_challenge}. In zkUCB, we encapsulate all input, output, environmental, and the intermediate process of the UCB into one statement, and then use this statement to build a deterministic arithmetic circuit for subsequent execution of zk-SNARK.

\subsection{Contributions} In this article, we address key challenges in merging UCB with zk-SNARKs, focusing on data confidentiality and transparent decision process verification. We employ Groth16~\cite{groth2016size} for its efficient zk-SNARKs implementation, notable for its fast verification process and suitability for large blockchain networks. Additionally, Groth16's proofs are compact, facilitating easy transmission and verification, crucial for large-scale applications. Our main contributions include:

\begin{enumerate}
    \item \textbf{A new framework:} We introduce the Zero-Knowledge Proof into the MAB problem, a novel integration of zk-SNARKs (specify to Groth 16) with the UCB algorithm (an approach for addressing the MAB problem). 

    \item \textbf{Theory that links UCB and zero-knowledge proofs:} We convert the UCB algorithm into arithmetic circuits compatible with zero-knowledge proofs and overcome the challenges of randomness, non-polynomial operations, and floating point numbers in UCB. This transformation secures precise algorithm output validation and maintains data and parameter confidentiality, resulting in a transparent decision process verification.

    \item \textbf{A throughout evaluation:} Our experiments demonstrate that zkUCB outperforms the standard UCB in terms of reward. This means that our algorithm makes decisions that are closer to optimal than before the improvements. This improvement is because appropriate quantization bits can effectively reduce information entropy in the decision-making process, enhancing rewards. Particularly noteworthy is the fact that the proof size for zkUCB is proportional to the number of execution steps within the algorithm, suggesting that zkUCB maintains a manageable scale even when dealing with large datasets. This aspect is crucial in applications where balancing data security and operational efficiency is paramount.
    
\end{enumerate}

\begin{figure}
    \centering
    \includegraphics[width=0.45\textwidth]{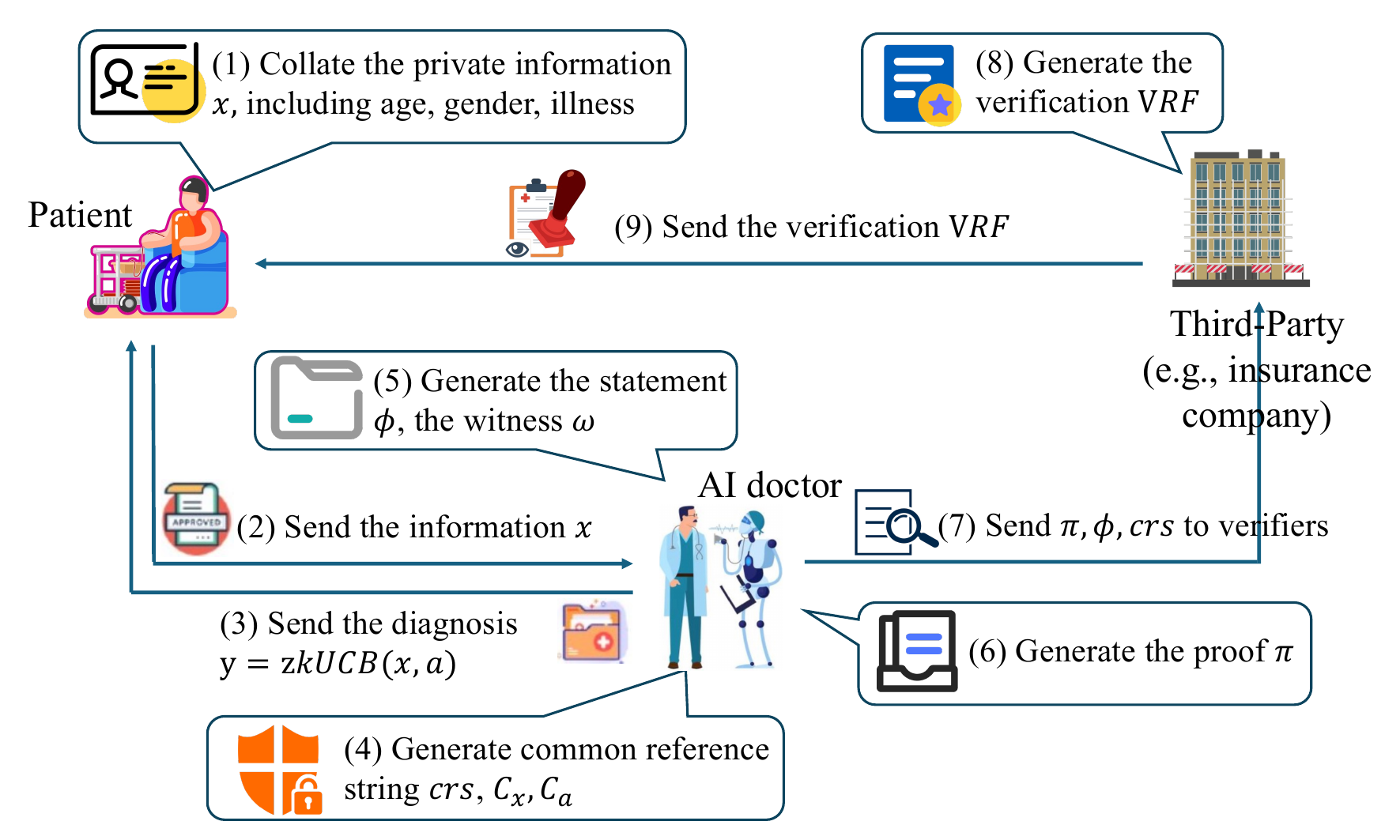}
    \caption{Application of zk-SNARKs in the AI Doctor System Using UCB Algorithm.}
    \label{AI doctor}
\end{figure}

\subsection{Use Case}\label{zkUCB_usecase}

Figure.~\ref{AI doctor} illustrates the application of zk-SNARKs to the AI doctor using the UCB algorithm. Here, the AI doctor functions as the prover, while the verifier could be either the patient themselves or a third party, such as an insurance company. Initially, the AI doctor gathers the patient's information, denoted as $x$, and computes a diagnostic result $y$ using the $zkUCB$ model and parameters $a$. Subsequently, the AI doctor creates a common reference string $crs$ and formulates commitments $C_x$ and $C_a$, based on $x$ and $a$ respectively. The next step involves the AI doctor generating a statement $\phi = (C_x, C_a, y)$ and a witness $w = (x, a)$, with the latter serving as a confidential input in the subsequent phase. The AI doctor then produces a proof $\pi = (crs, \phi, w)$ and dispatches $\pi$, the $crs$, and the statement $\phi$ to the third party for verification $VRF = (crs, \phi, \pi)$. After verification, the results $VRF$ are relayed to the patient. This ensures that commitments to the parameters and private data, along with the computational results and generated proofs, are presented to third parties for verification, all while safeguarding the integrity and confidentiality of both the data and algorithm.

\subsection{Organization}
The subsequent sections of this paper are structured as follows: In Section 2, we provide essential preliminaries. Section 3 delves into an in-depth discussion of zkUCB. Moving forward to Section 4, we present a thorough examination of comprehensive experimental results and analysis. In Section 5, we review the related work, highlighting the distinctions between our approach and existing methodologies. Finally, Section 6 encapsulates the paper with concluding remarks.

\section{Preliminaries}

\subsection{UCB Algorithm in MAB Problems}

The Multi-Armed Bandit (MAB) problem involves sequentially choosing from options with uncertain rewards to maximize cumulative rewards. The Upper Confidence Bound (UCB) algorithm addresses this challenge by balancing exploration (testing new options) and exploitation (leveraging known options). It achieves this equilibrium by constructing upper confidence bounds for each option, reflecting both anticipated rewards and uncertainty.

In this study, we focus on UCB1, a specific UCB algorithm version. The algorithm's workflow is outlined in Alg.~\ref{alg1}. It begins with an ``Initialization'' phase, playing each arm $i$ once to establish initial reward estimates ($x_i$). Following this, the algorithm updates the average reward $\bar{x}_i$ for each arm $i$ after it is played. Subsequently, the algorithm calculates a UCB value $\bar{x}_i + \sqrt{\frac{2 \cdot \ln n}{n_i}}$ for each arm $i$ and selects the arm with the highest UCB value. This strategic choice guides the algorithm toward achieving an optimal balance between exploring new arms and exploiting known ones.

\subsection{A General zk-SNARKs Scheme}
Zero-Knowledge Succinct Non-Interactive Argument of Knowledge (zk-SNARKs)~\cite{chen2022review} is a novel cryptographic protocol allowing a prover to authenticate a statement's truth to a verifier without extra information disclosure. zk-SNARKs stand out for their non-interactive nature, eliminating the need for continuous communication typical in traditional proofs. Their succinct proofs, small and fast to verify, remain efficient regardless of the complexity of the proven statement, enhancing their practicality in diverse applications.

\begin{algorithm}[t]
\caption{UCB1}
\label{alg1}
\begin{algorithmic}[1]
\STATE \textbf{Initialization:} Play each arm once
\STATE \textbf{Loop:} Play arm $j$ that maximizes $\bar{x}_j + 2 \sqrt{\frac{\ln n}{n_j}}$, where $\bar{x}_j$ is the average reward obtained from arm $j$, $n_j$ is the number of times arm $j$ has been played so far, and $n$ is the overall number of plays done so far.
\end{algorithmic}
\end{algorithm}

At first, the zk-SNARKs scheme involves converting computational statements (like the execution of a program) into a mathematical form (QAP) that can be efficiently proven and verified, as shown in Figure.~\ref{zkUCB}. Next, the workflow of zk-SNARKs mainly involves three stages: setup, proof, and verification, as follows:

\begin{enumerate}

    \item \textbf{Setup Phase:} zk-SNARKs scheme employs the setup algorithm to generate a common reference string ($crs$) including the evaluation key and verification key, which is used for proof generation and proof verification.

    \item \textbf{Proof Generation:} the prover employs the prove algorithm $Prove(\cdot)$ computes proof $\pi$ using the secret input $w$ (also named witness), the statement $\phi$, and the $crs$. I.e., $\pi \leftarrow Prove(crs, \phi, w)$, where the witness comprises private input essential for specific computations, such as input data and parameters for an ML model. It also encompasses all intermediate computational steps and results in converting high-level programs to QAP.

    \item \textbf{Proof Verification:} the verifier uses the verify algorithm $Verify(\cdot)$ to check the proof $\pi$ against the $crs$ to confirm the statement $\phi$'s truth without additional interaction. I.e., $0/1 \leftarrow Verify(crs, \phi, \pi)$, where ``1'' and ``0'' denote verification successful and failed, respectively.

\end{enumerate}

The zk-SNARK scheme can guarantee four properties, which are:
\begin{enumerate}
    \item \textbf{Completeness:} given a true statement $\phi$ and a proof $\pi$, both from the prover, an argument is complete if the probability of successful verification is $1$: \begin{equation}\label{Completeness}
    Pr[Verify(crs, \phi, \pi) = 1] = 1
    \end{equation}

    \item \textbf{Knowledge Soundness:} given a statement $\phi$ from a Probabilistic Polynomial-Time (PPT) adversary $\mathcal{A}$, who can forge the witness $w$ to generate a proof $\pi$. An argument is knowledge soundness if the probability that $\pi$ generated by $\mathcal{A}$ is successfully verified is almost $0$:
\begin{equation}\label{Knowledge Soundness}
    Pr[Verify(crs, \phi, \pi) = 1 | (\phi, \pi, w) \leftarrow \mathcal{A}] \approx 0
\end{equation}

    \item \textbf{Succinctness:} an argument is succinct if the size of the proof satisfies
\begin{equation}\label{Succinctness}
    |\pi| \le poly(k)poly \log (|\phi|+|w|)
\end{equation}
where $k$ refers to the security parameter, which dictates the system's security level and determines key sizes and cryptographic algorithm complexity.

    \item \textbf{Zero-knowledge:} given a simulation trapdoor $td$ for generating a simulated witness $w$, a simulator $Sim(\cdot)$ for creating a simulated proof that can pass the verification process without the true witness. Facing a  malicious verifier $\mathcal{A}$, an argument is zero-knowledge if $\mathcal{A}$ cannot discern whether the prover or the simulator generated the proof, i.e.,

\begin{equation}\label{Zero-knowledge}
\begin{split}
    &Pr[\mathcal{A}(crs,\phi,\pi) = 1 | \pi \leftarrow Prove(crs, \phi, w)]\\
    &= Pr[\mathcal{A}(crs,\phi,\pi) = 1 | \pi \leftarrow Sim(crs, \phi, td)]
\end{split}
\end{equation}

This indistinguishability denotes that the prover does not leak any information other than the truth of the statement, ensuring the essential security and privacy features of the zk-SNARKs scheme.

\end{enumerate}

From the above, zk-SNARK facilitates a secure, private, and efficient verification process in computing environments, making it invaluable in environments where data confidentiality and integrity are critical.

\section{zkUCB}

This section explores the integration of zk-SNARKs into the UCB algorithm. For efficiency, we utilize Zokrates, a toolkit providing both a circuit compiler and a proof system. This allows us to create a corresponding program code straightforwardly, which compiles the UCB algorithm into R1CS and uses Zokrates to generate CRS and zero-knowledge proofs. Next, we will first delve into how we tackled the implementation challenges of zkUCB. This will be followed by an overview of the zkUCB workflow, offering insights into its operational dynamics.

\subsection{Overcoming Challenges in Integrating zkUCB}\label{overcome_challenge}

In our research, we tackled the inherent challenges in implementing zkUCB by employing strategic solutions as follows.

\subsubsection{Pseudo-random Number Generator}

One of the fundamental challenges in integrating the UCB algorithm with zk-SNARKs arises from the need for randomness in MAB problems. Particularly, the UCB algorithm often requires random variables to decide between arms with identical maximum expected rewards. Moreover, in reinforcement learning scenarios, employing random variables with specific probability distributions is crucial for modeling uncertainty, optimizing policies, and facilitating adaptive decision-making in stochastic environments. However, zk-SNARKs demand determinism in the compiled programs, presenting a significant hurdle in blending reinforcement learning with zero-knowledge proofs. To overcome this, we incorporated a statistical pseudorandom number generator. This inclusion allows for the deterministic encoding of the UCB algorithm under the MAB model into a digital circuit, thereby transforming a fundamentally stochastic system into a deterministic one.

In the field of stochastic processes, the generation of discrete random variables conforming to a uniform distribution can be achieved by inputting a seed parameter into a linear congruential generator (LCG) denoted as $F$. The seed is a parameter determined by the user. In the execution of $F$, there is also a second input parameter $u$, which is the upper bound of its output. Consequently, the sequence of random numbers $\{X_1,...,X_n\}$ with length $n$ is generated by $X_i = F(seed, u), 1 \leq i \leq n$. Every $X_i$ belongs to the uniform distribution with mean $u/2$ and range of $(0, u)$.

\subsubsection{Non-polynomial Functions}

In the context of zk-SNARK, all programs must be performed by polynomial operations in finite fields. A critical aspect of zkUCB involves converting non-polynomial operations in the UCB algorithm, such as logarithms and non-integer powers, into polynomial forms. Our solution for handling logarithmic functions is the implementation of piecewise linear approximation by focusing on discrete intervals of positive integers. After performing the approximation, and rounding down results to the nearest integer, we achieved a practical and statistically robust approximation. Similarly, for approximating non-integer powers, particularly square roots as required in the UCB algorithm, we utilized Newton's method, fixed the number of iterations to $20$, and rounded down results to the nearest integer. These strategies allow us to approximate these functions using polynomial methods, ensuring compatibility with the computational constraints of zk-SNARKs while maintaining uniform time and space complexity.

\subsubsection{Quantization for Floating Point Numbers}

The UCB algorithm predominantly deals with real numbers, which inherently exist in a continuous space. Conversely, zk-SNARKs operate within finite fields, necessitating discrete computations. To bridge this gap, we employed a quantization process that maps the floating point numbers to positive integers. This process involves multiplying each element by a predetermined scaling factor ($q$). However, as depicted in Alg.~\ref{alg2}, directly applying proportional scaling to $n$ and $N_j$ is not viable. Such scaling significantly alters the value of $n$ away from zero, leading to a negligible derivative of $\ln{n}$, which in turn disrupts the crucial balance between exploration and exploitation. This imbalance is particularly evident in the learning agent's reduced inclination towards exploratory actions initially. Moreover, if proportional scaling is straightforwardly applied to $\ln{n}$ and $N_j$, the results remain essentially identical to those in the unscaled scenario. Yet, the scaled $\bar{x}_j$ becomes disproportionately larger than the other unscaled components, once again disturbing the exploration-exploitation equilibrium. By applying the scaling parameter uniformly to both the prior and subsequent components, we streamline the computations and attain more favorable results.

\begin{algorithm}[t]
\caption{zkUCB}
\label{alg2}
\begin{algorithmic}[1]
\STATE \textbf{Initialization:} Play each arm once
\STATE \textbf{Loop:} Play arm $j$ that maximizes $q \bar{x}_j + 2 q \left \lfloor{\sqrt{\frac{\left \lfloor{\ln n}\right \rfloor}{n_j}}}\right \rfloor$, where $\bar{x}_j$ represents the average reward obtained from arm $j$, $n_j$ is the number of how many times the arm $j$ has been played to date, $n$ denotes the overall number of plays done so far, and $\lfloor \cdot \rfloor$ indicates rounding down to the nearest integer.
\end{algorithmic}
\end{algorithm}

In our study, we assume that the reward from each arm follows a normal distribution. To make zkUCB satisfy this requirement, we use sliding window sampling to sample data from a uniform distribution to simulate a normal distribution based on the principle of the Central Limit Theorem. In zkUCB, we set the mean and the range to be $q$ and $(0, 2q)$, respectively for the uniform distribution. We repeat this sampling process for $20$ times and generate the average denoted as $(\bar{x})$. Obviously, $(\bar{x})$ belongs to a normal distribution with mean $q$ and range $(0, 2q)$. Since the mean of normal distribution satisfies the additivity,  this $(\bar{x})$ can be used to calculate the actual reward in zkUCB.


\subsection{The Workflow of zkUCB}

This subsection shows how the zkUCB scheme works.

\begin{itemize}
    \item \textbf{Setup} $(crs, td) \leftarrow Setup(zkUCB)$
    
     In this stage, the setup algorithm generates a common reference string $crs$ and a simulation trapdoor $td$. Where $crs$ includes the evaluation key and the verification key, $td$ is used to generate a simulated witness when checking the zero-knowledge properties of zkUCB to further generate simulated proofs $\pi$.
     
    \item \textbf{Proof Generation} $\pi \leftarrow Prove(crs, \phi, w)$

    In the proof generation stage, the prove algorithm computes proof $\pi$ using the secret input $w$ (also named witness), the statement $\phi$, and the $crs$. Where the statement $\phi$ is constructed using several elements, including the operations of a pseudo-random number generator determined by a seed, the expected rewards of each arm, the reward received after playing arm during each round, and the implementation of the UCB algorithm under the MAB model. The witness $w$ serves as private input, including the input data, parameters, all intermediate computational steps, and results in converting high-level programs to QAP.
        
    \item \textbf{Proof Verification} $0/1 \leftarrow Verify(crs, \phi, \pi)$
    
    In the proof verification phase, the verifier uses the verify algorithm to assess the proof $\pi$ against the $crs$, thereby ascertaining the truth of the statement $\phi$. Where ``1'' signifies successful verification, and ``0'' denotes failure.
\end{itemize}

\begin{theorem}
    The above zkUCB scheme satisfies a non-interactive zero-knowledge argument of knowledge with completeness and perfect zero-knowledge. It has computational knowledge soundness against a Probabilistic Polynomial-Time adversary.
\end{theorem}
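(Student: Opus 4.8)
The plan is to reduce every property of zkUCB to the corresponding property of the underlying Groth16 zk-SNARK, exploiting the fact that the three implementation fixes of Section~\ref{overcome_challenge} --- seeding the LCG $F$, the piecewise-linear and fixed-iteration Newton approximations, and uniform quantization by $q$ --- turn the entire UCB-under-MAB execution into a single \emph{deterministic}, finite arithmetic computation over the prime field $\mathbb{F}$ used by Groth16. First I would make this precise: fix the horizon (number of plays), unroll Algorithm~\ref{alg2} together with the sampling and approximation subroutines, and exhibit the resulting straight-line arithmetic circuit $C$ built from the addition and multiplication gates over $\mathbb{F}$; then invoke the standard chain $C \mapsto \text{R1CS} \mapsto \text{QAP}$, so that an assignment satisfies the QAP iff it is a correct gate-by-gate evaluation of $C$ on inputs consistent with the statement $\phi=(C_x,C_a,y)$ and witness $w=(x,a,\text{intermediate values})$. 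This faithfulness lemma is the linchpin of the whole argument.

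Given that, \emph{completeness} is immediate: an honest prover runs zkUCB, obtains a satisfying QAP assignment by construction, and Groth16 completeness, Eq.~\eqref{Completeness}, gives $\Pr[Verify(crs,\phi,\pi)=1]=1$. For \emph{perfect zero-knowledge} I would invoke Groth16's simulator: using the trapdoor $td$ produced by $Setup(zkUCB)$, the simulator $Sim(crs,\phi,td)$ outputs a proof whose distribution is identical to that of $Prove(crs,\phi,w)$ for every $\phi$ in the language, which is exactly Eq.~\eqref{Zero-knowledge}; no property of UCB is used here beyond the fact that $\phi$ fixes all public outputs. For \emph{computational knowledge soundness} I would describe the knowledge extractor: for any PPT adversary $\mathcal{A}$ that outputs an accepting $(\phi,\pi)$ with non-negligible probability, Groth16's extractor (in the algebraic-group / generic-group model in which Groth16 is proven) returns a QAP-satisfying assignment, and by the faithfulness lemma this assignment \emph{is} a valid zkUCB execution trace --- in particular it contains a witness $w=(x,a)$ whose run produces the claimed output $y$ --- so the probability in Eq.~\eqref{Knowledge Soundness} of accepting without such a $w$ is negligible. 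Succinctness of $\pi$, Eq.~\eqref{Succinctness}, is inherited verbatim from the constant-size Groth16 proof.

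The main obstacle I expect is the faithfulness lemma, and in particular two subtleties that are easy to state loosely and delicate to discharge. First, \emph{derandomization must be auditable}: the LCG seed must be committed as part of $\phi$ (or hard-wired into $C$), otherwise a cheating prover could grind seeds to bias arm selection while still satisfying the QAP; the proof has to show that $C$ recomputes $\{X_i\}=\{F(seed,u)\}$ from the public seed, so that no such freedom remains. Second, \emph{approximation is part of the relation, not an error term}: because $\lfloor\ln n\rfloor$, the $20$-iteration Newton square root, and the scaling by $q$ are all computed \emph{inside} $C$, the soundness guarantee is for the quantized relation of Algorithm~\ref{alg2}, not for the idealized UCB1 of Algorithm~\ref{alg1}; I would phrase the theorem's guarantee against exactly that quantized relation and note that the closeness of the two is an accuracy matter handled empirically in Section~4, not a security one. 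Once these points are pinned down, the four claimed properties follow mechanically from the Groth16 guarantees.
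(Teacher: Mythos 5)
Your plan is sound, but it takes a genuinely different --- and substantially more rigorous --- route than the paper. The paper's own proof contains no reduction at all: it simply restates, for zkUCB, the three generic zk-SNARK property definitions already given in Section 2 (completeness, knowledge soundness, zero-knowledge, with starred symbols $\phi^*,\pi^*,w^*$ for the adversarial case) and then declares that the scheme satisfies them; there is no circuit-faithfulness argument, no appeal to Groth16's simulator or extractor, and no discussion of where the determinization, approximation, or quantization steps enter the security claim. Your proposal supplies exactly the missing structure: the faithfulness lemma (a satisfying QAP assignment is precisely a correct gate-by-gate execution of the unrolled, derandomized, quantized circuit) is the linchpin that lets each property be inherited from Groth16, and your two flagged subtleties are real. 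In particular, the requirement that the LCG seed be bound into the public statement $\phi$ (or hard-wired into the circuit) is not addressed anywhere in the paper --- the workflow only says the statement is ``constructed using \ldots the operations of a pseudo-random number generator determined by a seed'' --- so without your fix a prover could indeed grind seeds while still producing accepting proofs. Likewise, your observation that soundness holds for the quantized relation of Algorithm 2 rather than the idealized UCB1 of Algorithm 1 is an important scoping of the theorem that the paper leaves implicit. The only caveat is that yours is a plan rather than a finished proof: the faithfulness lemma and the extractor argument (including the algebraic/generic-group assumptions under which Groth16's knowledge soundness is actually proven) would still need to be written out. But as an approach it is correct and strictly stronger than what the paper provides.
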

\begin{proof}
    We prove this theorem by analyzing how the zkUCB scheme guarantees the properties mentioned in the theorem, as shown below.

\textbf{Completeness:} for the true statement $\phi$ and the proof $\pi$, both from the prover, the zkUCB is complete if the probability of successful verification is $1$: 
\begin{equation}\label{zkUCB_Completeness}
    Pr[Verify(crs, \phi, \pi) = 1] = 1
\end{equation}

\textbf{Knowledge Soundness:} given a statement $\phi^*$ from a Probabilistic Polynomial-Time (PPT) adversary $\mathcal{A}$, who can forge the witness $w^*$ to generate a proof $\pi^*$. the zkUCB has computational knowledge soundness if the probability that $\phi^*$ is successfully verified is close to $0$ and negligible:
\begin{equation}\label{zkUCB_Knowledge_Soundness}
    Pr[Verify(crs, \phi^*, \pi^*) = 1 | (\phi^*, \pi^*, w) \leftarrow \mathcal{A}] \approx 0
\end{equation}

\textbf{Zero-knowledge:} given a simulation trapdoor $td$ for generating a simulated witness $w'$, a simulator $Sim(\cdot)$ for creating a simulated proof $\pi'$ that can pass the verification process without the true witness $w$. Facing a  malicious verifier $\mathcal{A}$, the zkUCB is zero-knowledge if the malicious verifier $\mathcal{A}$ cannot discern whether the prover or the simulator generated the proof, i.e.,

\begin{equation}\label{zkUCB_Zero-knowledge}
\begin{split}
    &Pr[\mathcal{A}(crs,\phi,\pi) = 1 | \pi \leftarrow Prove(crs, \phi, w)]\\
    &= Pr[\mathcal{A}(crs,\phi,\pi') = 1 | \pi' \leftarrow Sim(crs, \phi, td)]
\end{split}
\end{equation}

This indistinguishability characteristic ensures that the prover reveals no information beyond the veracity of the statement. Consequently, neither a well-behaved verifier nor a malicious verifier can glean any additional knowledge from the statement $\phi$. This ensures the inherent security and privacy properties integral to the zkUCB scheme.
\end{proof}

In summary, zkUCB fulfills the criteria of completeness, perfect zero-knowledge, and computational knowledge soundness, against probabilistic polynomial-time adversaries. This makes zkUCB stand as a highly secure, private, and efficient framework, adept at facilitating transparent verification processes in computing environments. It is especially promising in scenarios where data confidentiality and integrity are of paramount importance.

\section{Experimental Evaluation and Analysis}

\subsection{Experimental Objectives}

In this section, we conduct a comprehensive assessment of the effectiveness of zkUCB. Our evaluation focuses on several key aspects:
\begin{enumerate}
    \item \textit{Reward Comparison:} We analyze the performance disparities between the standard UCB algorithm and zkUCB, with a particular emphasis on the average reward generated during each step across 100 iterations over 200 steps. 

    \item \textit{Time Analysis:} We delve into various time-related metrics for zkUCB, including setup time, compile time (the duration required for converting high-level programs to R1CS), compute witness time, generate proof time, and verify time. These measurements are conducted over 200 steps, with results recorded at intervals of 10 steps.

    \item \textit{Components Size:} We examine the sizes of essential components for zkUCB, including the proving key size, proof size, verifying key size, and witness size. These metrics are assessed over 200 steps, with measurements recorded at 10-step intervals.

\end{enumerate}

\subsection{Experimental Setup}

Our experiment leverages ZoKrates, as it enables the selection of a proof system, elliptic curves for pairing, and zk-SNARKs backend at our discretion. In this context, we have opted for the ALT\_BN128 elliptic curve. The Groth-16 protocol is chosen as the proof scheme. Additionally, the Ark backend is selected. The rationale for employing this specific configuration is attributed to its well-structured design, making it among the more favored options in current practical implementations. Specifically, we operate the following two settings for different experimental purposes. 

\subsubsection{Setting I} 
This experiment is under a three-armed MAB model and runs a UCB algorithm with 200 steps, where each arm has initially expected rewards set at 0.9, 1.0, and 1.1, respectively. We set three different levels of quantization ($2^4$, $2^8$, $2^{16}$) and calculate the average reward across $100$ iterations. To adapt to zk-SNARKs, we perform quantization operations on the UCB algorithm. To compare the expected reward of the quantized zkUCB with the non-quantified version of the standard UCB, we multiply the reward generated by the zkUCB by the reciprocal of the quantified scaling ratio, so that the reward generated by zkUCB and the reward generated by ordinary UCB belong to a same distribution, keeping the baseline consistent. The theoretical basis for this is the additivity of expectations derived from random variables.

\subsubsection{Setting II} 
Maintaining the same foundational parameters as in Setting I, this experiment executes a UCB algorithm with various steps (ranging from 20 to 200, with increments of 10 steps per interval) for $100$ iterations to average the results.

\subsection{Results and Analysis}

\subsubsection{Analysis of Reward}

Figure.~\ref{reward_analysis} compares the rewards generated by the standard UCB algorithm and zkUCB in each round. It demonstrates that zkUCB with smaller quantization bits slightly underperforms compared to the standard UCB. In contrast, zkUCB configured with larger quantization bits generally exceeds the performance of the standard UCB. Notably, the optimal performance is observed with zkUCB at a medium quantization level. This improvement is because appropriate quantization bits can effectively reduce information entropy in the decision-making process, enhancing rewards. However, overly high quantization may reduce information entropy excessively, causing a loss of vital information, which in turn leads to performance degradation of zkUCB. Consequently, a well-calibrated quantization level that optimally reduces information entropy is key to maximizing zkUCB's performance. 

\begin{figure}[t]
    \centering
    \includegraphics[width=0.4\textwidth]{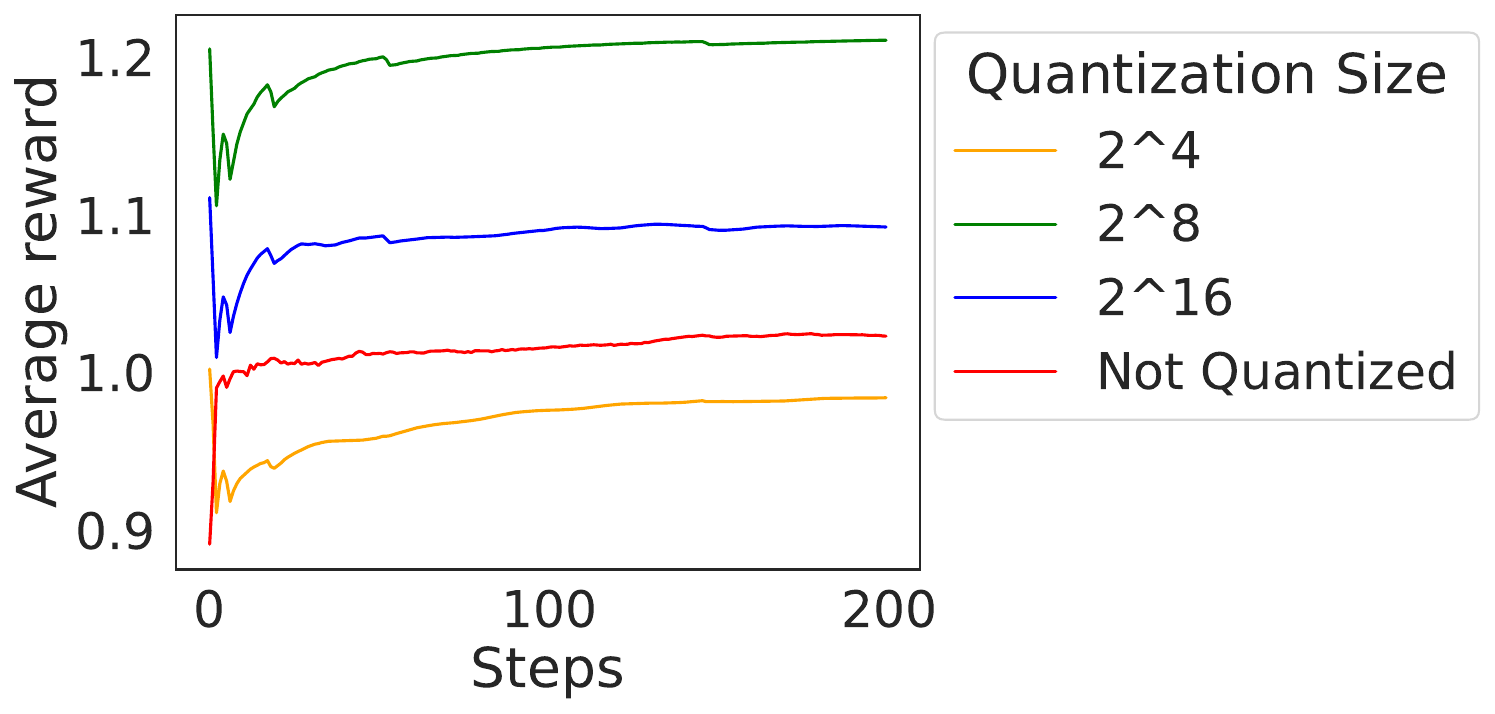}
    \caption{Average reward generated during each round by UCB and zkUCB over 100 iterations}
    \label{reward_analysis}
\end{figure}

\subsubsection{Time Efficiency Across Stages in zkUCB}

Figure.~\ref{time_analysis_1} and Figure.~\ref{time_analysis_2} display the average execution times for various stages of zkUCB with different quantization bits over a range of steps. Figure.~\ref{time_analysis_1} highlights consistent performance in setup, compilation, witness calculation, and proof generation times across various quantization sizes, showing these times are influenced more by the number of algorithm steps than by quantization bits. Figure.~\ref{time_analysis_2}, however, reveals a notable difference in verification times for zkUCB with different quantization bits; larger quantization numbers result in longer verification times, which also scale with the number of algorithm steps. These findings demonstrate zkUCB's promising application potential and scalability, especially considering the minimal verification times, which are within milliseconds.

\begin{figure}[t]
    \centering
    \begin{subfigure}[b]{0.24\textwidth}
        \includegraphics[width=\textwidth]{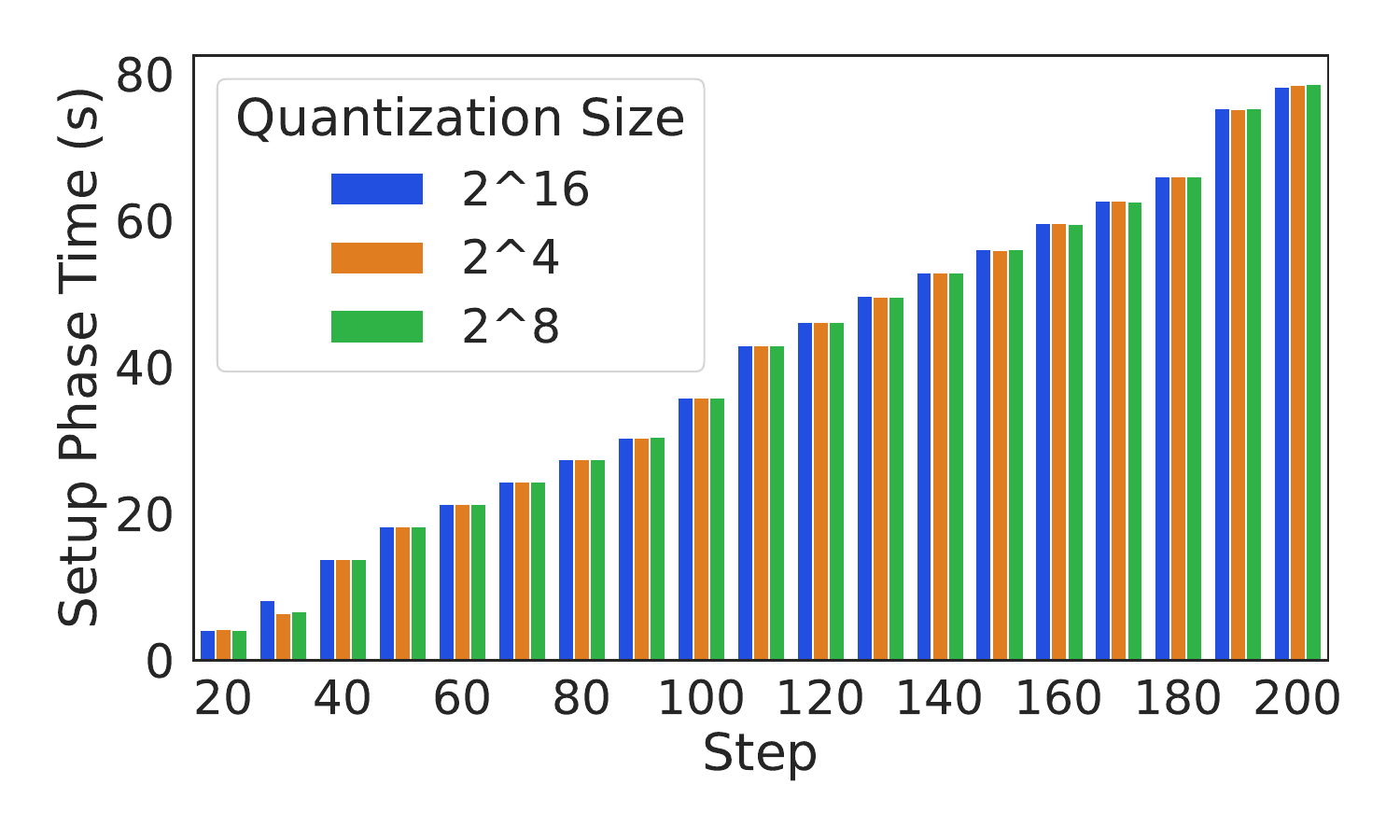}
        \caption{Setup phase time}
        \label{setup_time}
    \end{subfigure}
    \hspace{-0.1in}
    \begin{subfigure}[b]{0.24\textwidth}
        \includegraphics[width=\textwidth]{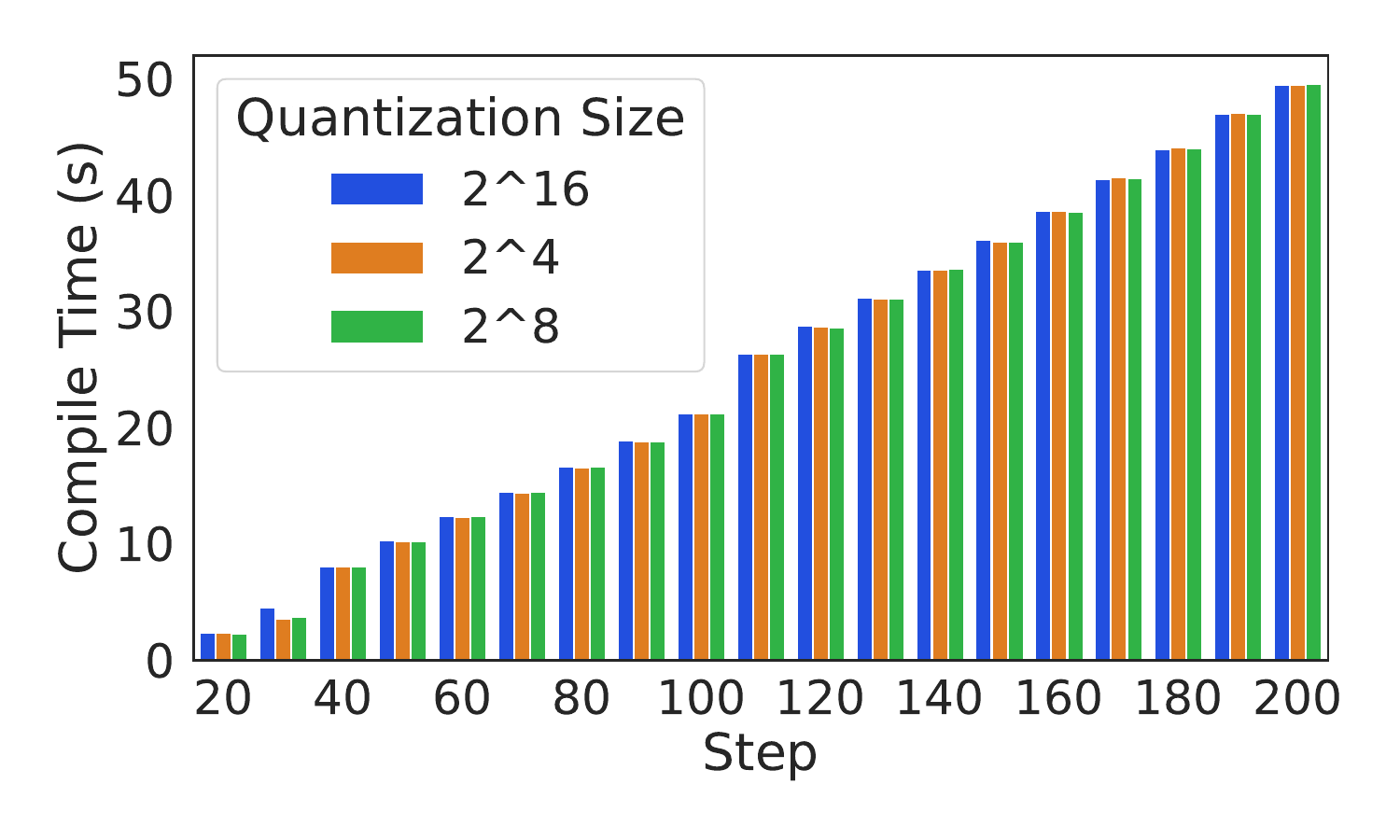}
        \caption{Compile time}
        \label{compile_time}
    \end{subfigure}
    \begin{subfigure}[b]{0.24\textwidth}
        \includegraphics[width=\textwidth]{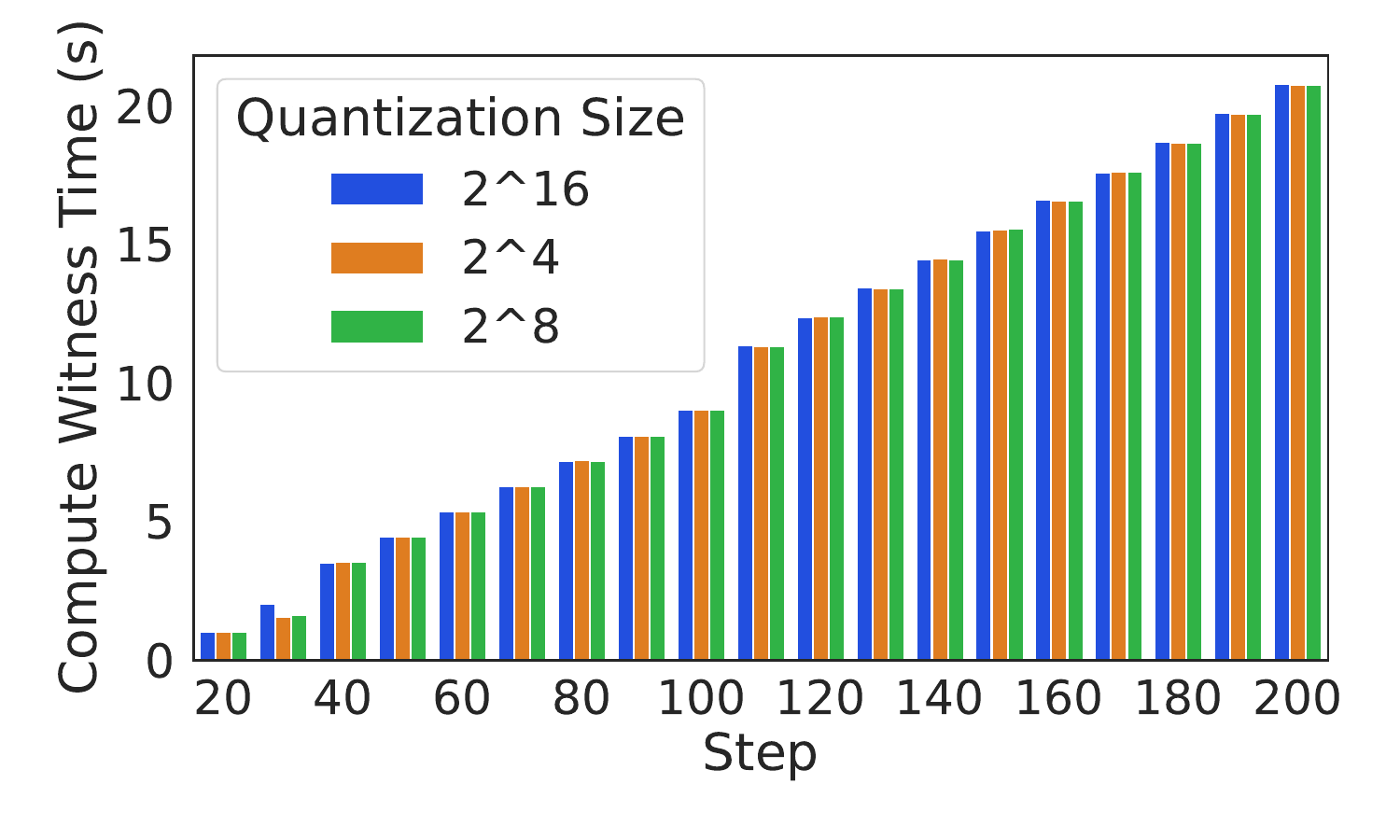}
        \caption{Compute witness time}
        \label{compute_witness_time}
    \end{subfigure}
        \hspace{-0.1in}
    \begin{subfigure}[b]{0.24\textwidth}
        \includegraphics[width=\textwidth]{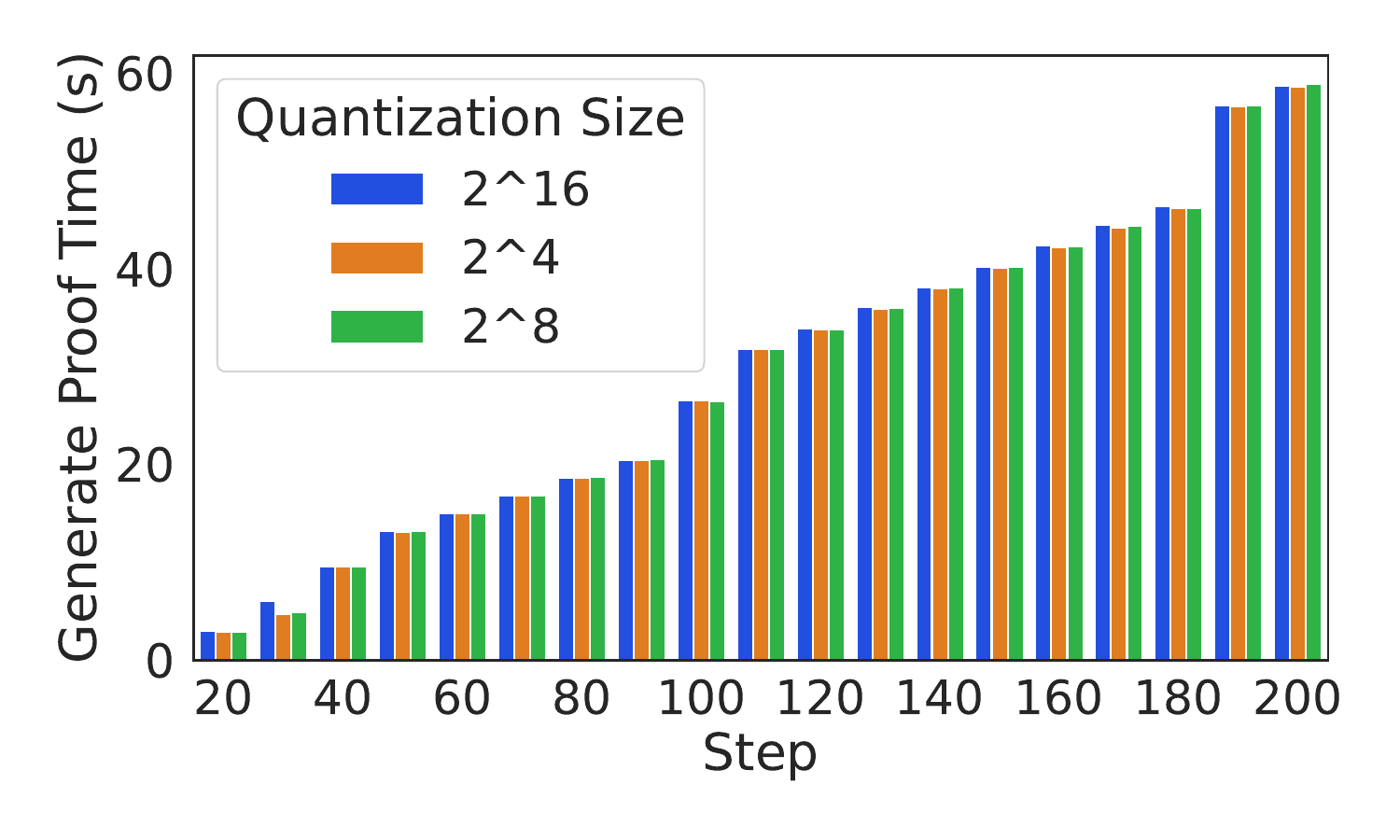}
        \caption{Generate proof time}
        \label{generate_proof_time}
    \end{subfigure}
    \caption{The average time during various phases}
    \label{time_analysis_1}
\end{figure}

\begin{figure}[t]
    \centering
    \begin{subfigure}[b]{0.24\textwidth}
        \includegraphics[width=\textwidth]{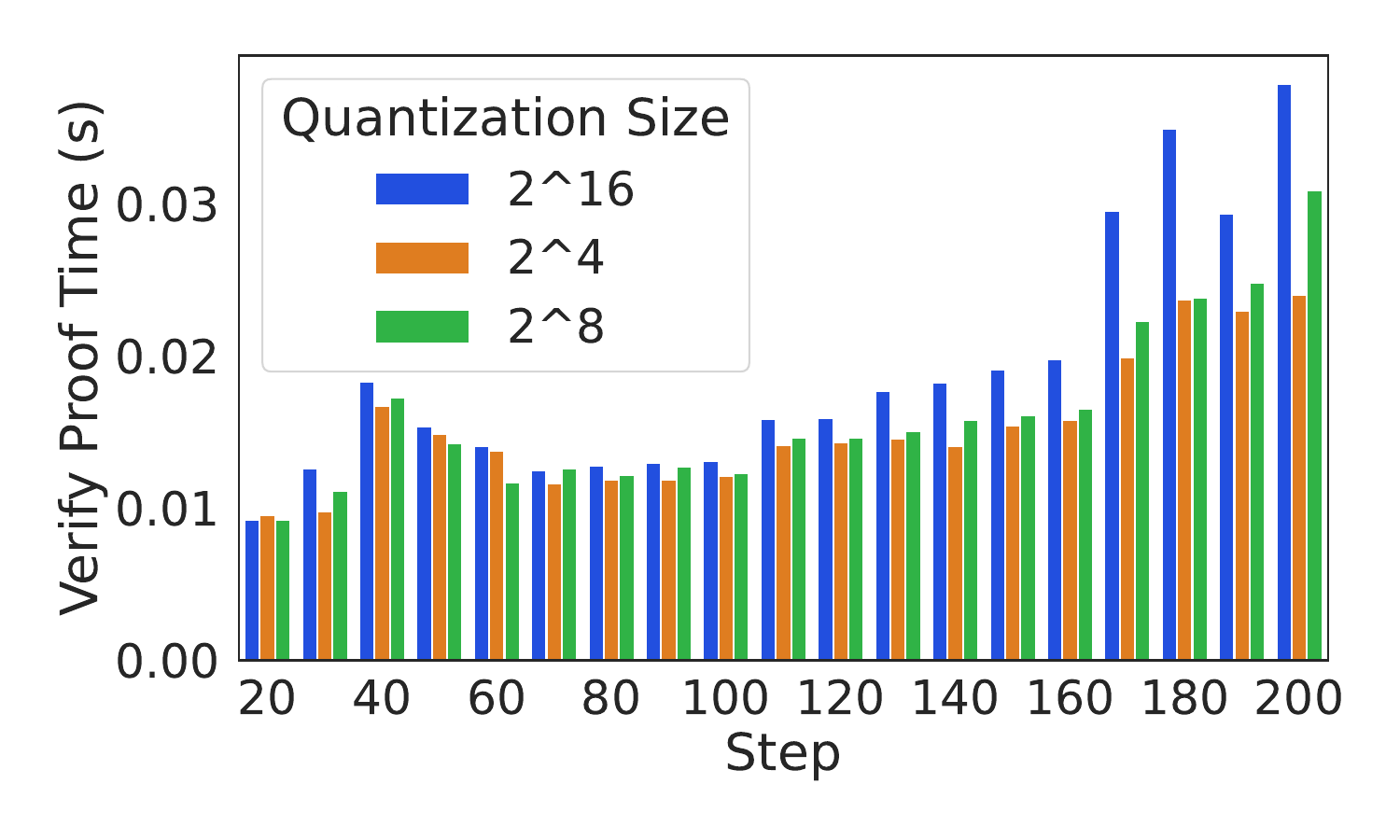}
        \caption{Histogram}
        \label{verify_proof_time_bar}
    \end{subfigure}
    \hspace{-0.1in}
    \begin{subfigure}[b]{0.24\textwidth}
        \includegraphics[width=\textwidth]{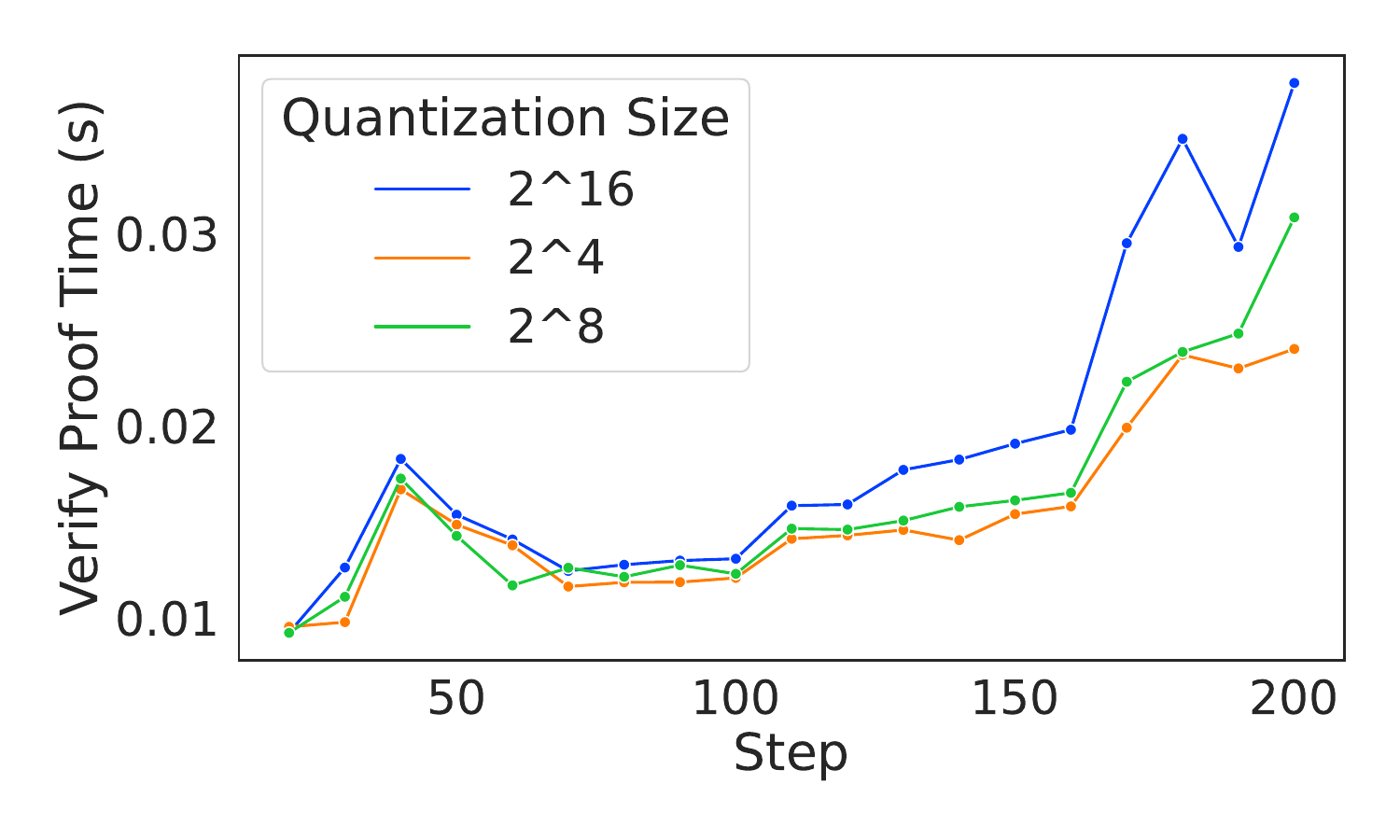}
        \caption{Line chart}
        \label{verify_proof_time_line}
    \end{subfigure}
    \caption{The average time for verifying proof}
    \label{time_analysis_2}
\end{figure}

\subsubsection{Components Size in zkUCB}

Figure.~\ref{components_size} displays the sizes of various components of zkUCB with different quantization bits across a sequence of steps, including witness size, proving (evaluation) key size, verifying (verification) key size, and proof size. The results reveal that the quantization bits have a negligible impact on the sizes of these zkUCB components. Instead, the size of these components is more influenced by the number of algorithm steps, showing a proportional relationship to these steps. Notably, even within 200 steps, the proof size remains relatively modest, only in the tens of kilobytes range, highlighting zkUCB’s considerable potential for broad application and scalability.

\begin{figure}[t]
    \centering
    \begin{subfigure}[b]{0.24\textwidth}
        \includegraphics[width=\textwidth]{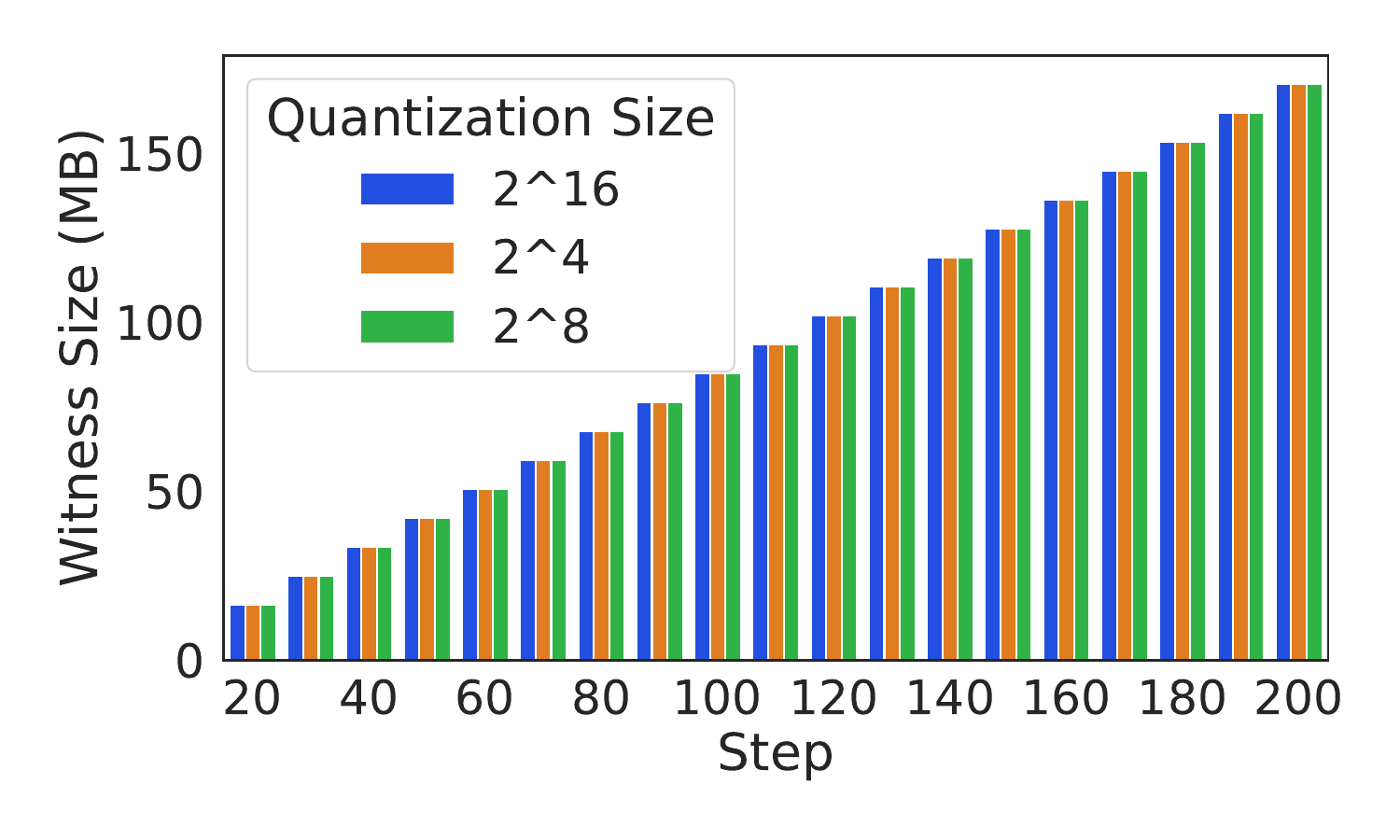}
        \caption{Witness size}
        \label{witness_size}
    \end{subfigure}
    \hspace{-0.1in}
    \begin{subfigure}[b]{0.24\textwidth}
        \includegraphics[width=\textwidth]{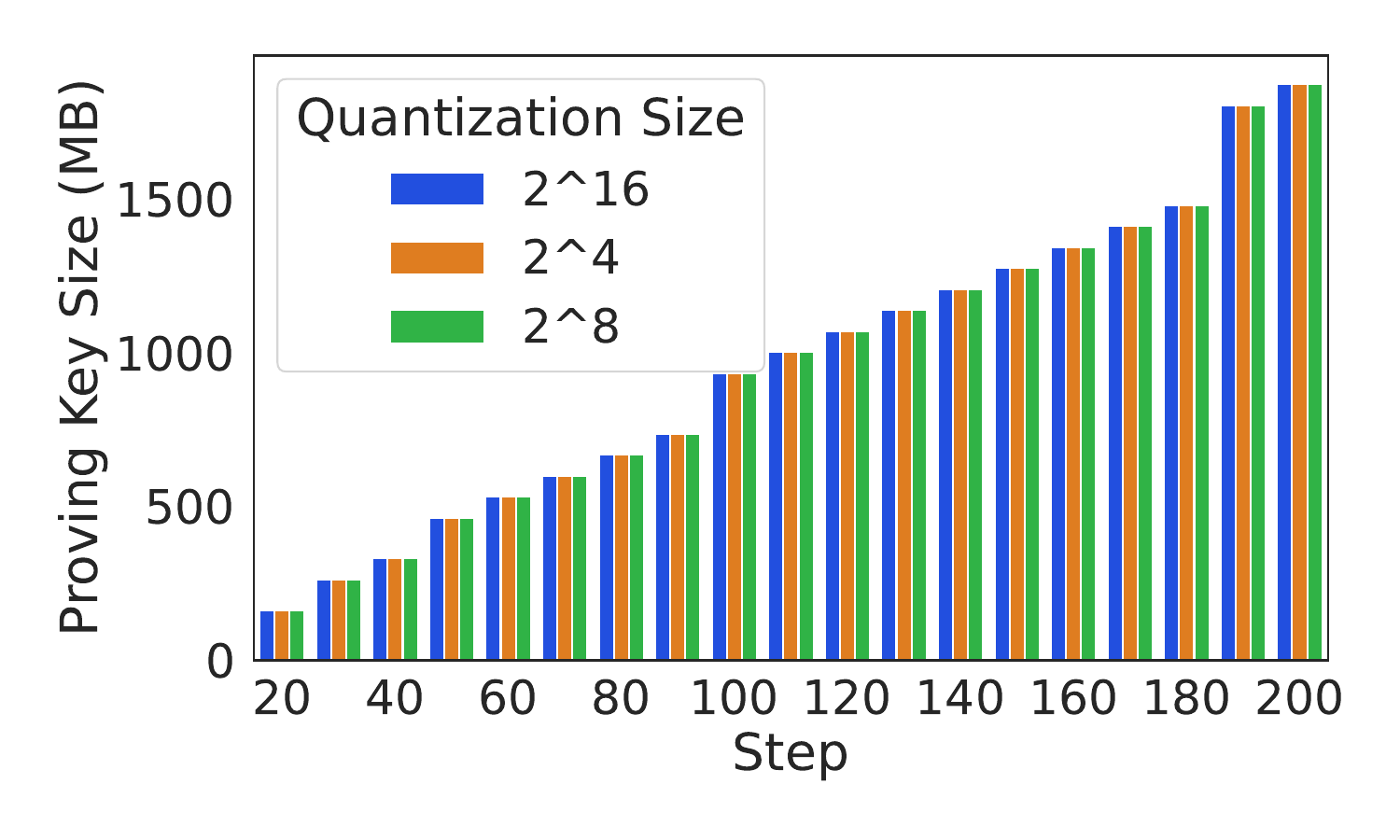}
        \caption{Proving key size}
        \label{proving_key_size}
    \end{subfigure}
    \begin{subfigure}[b]{0.24\textwidth}
        \includegraphics[width=\textwidth]{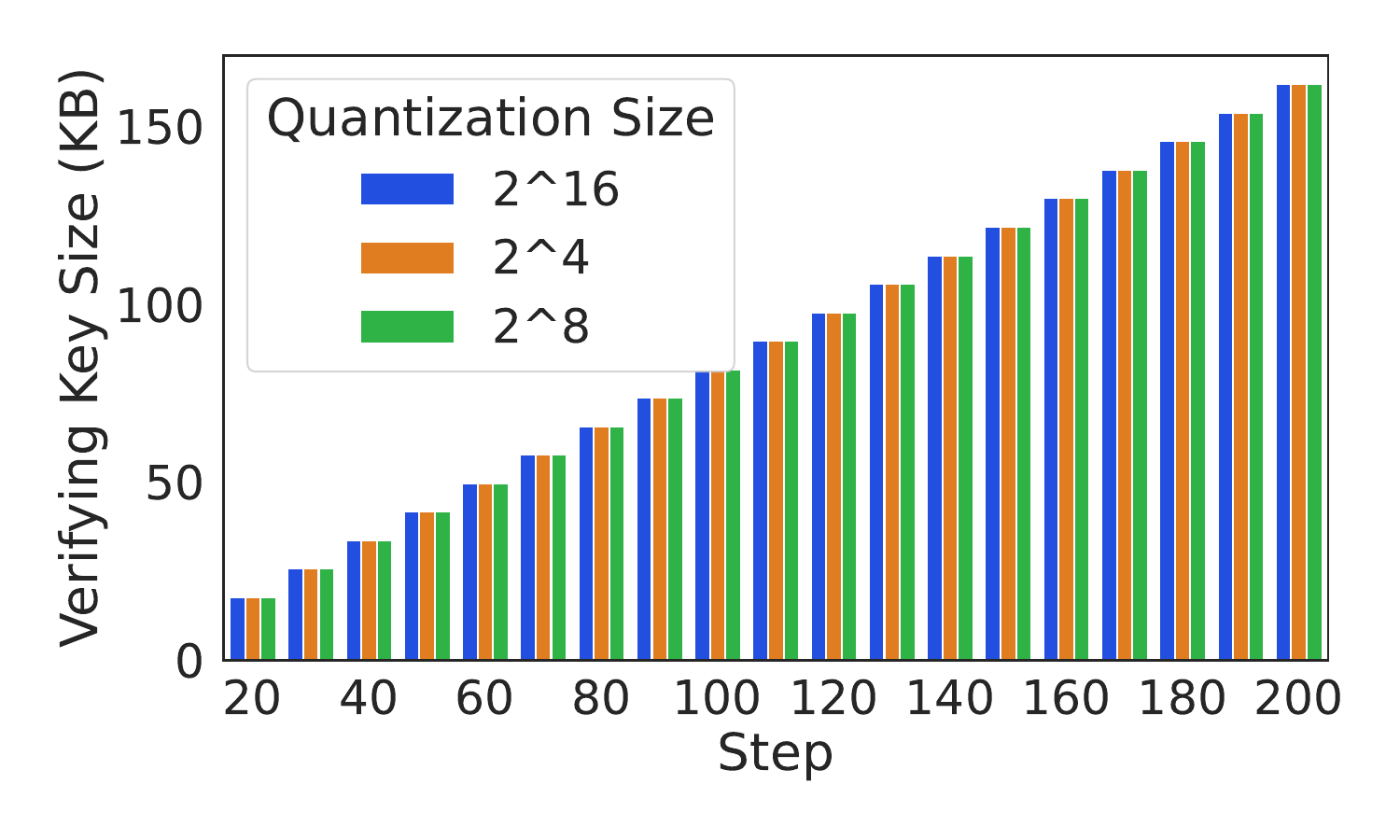}
        \caption{Verifying key size}
        \label{verifying_key_ize}
    \end{subfigure}
        \hspace{-0.1in}
    \begin{subfigure}[b]{0.24\textwidth}
        \includegraphics[width=\textwidth]{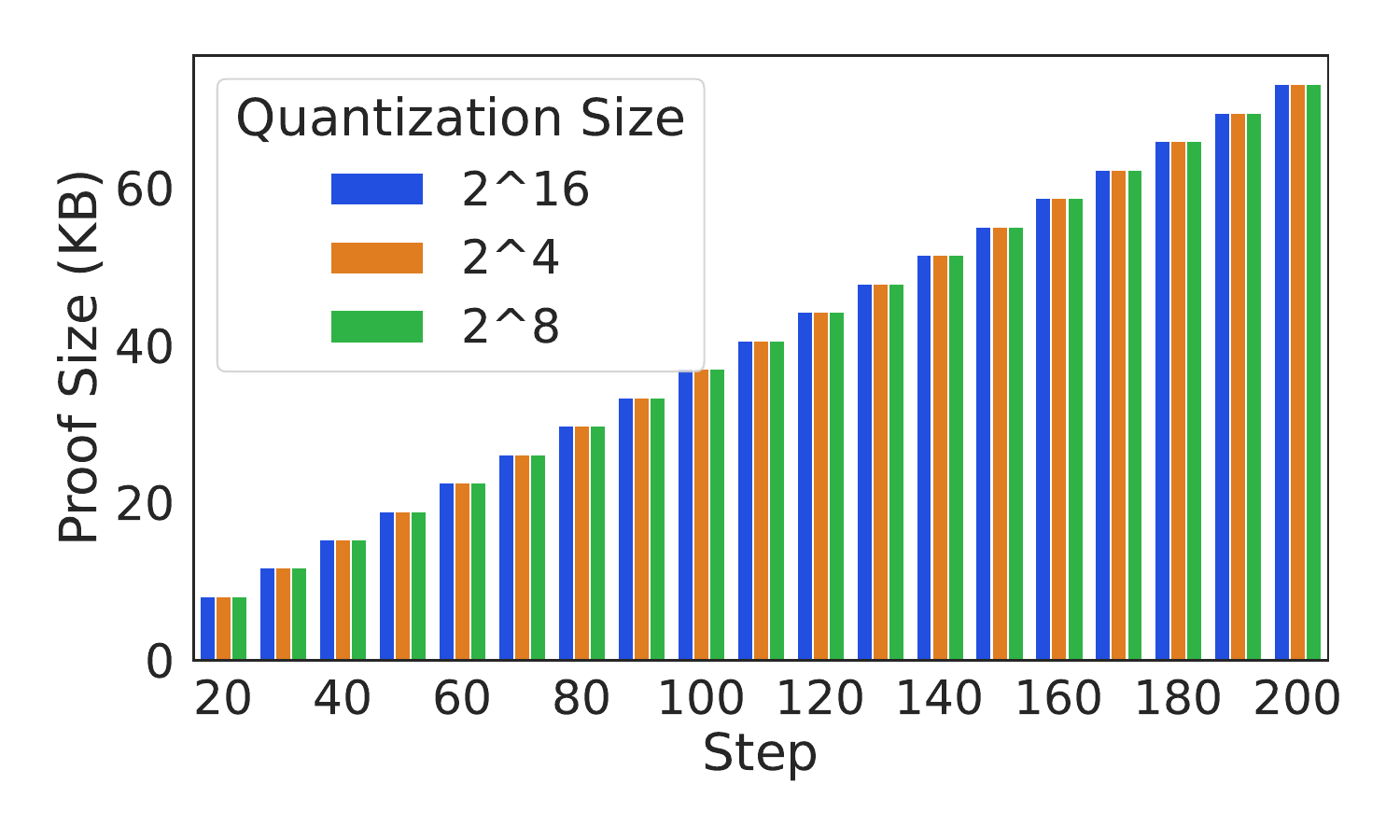}
        \caption{Proof size}
        \label{proof_size}
    \end{subfigure}
    \caption{The size of various components in zkUCB}
    \label{components_size}
\end{figure}

\section{Related Work} 
Here we first discuss the state-of-art for the verifiable computation then we move to discuss its implementation within machine learning. Finally, we discuss the shortcomings and how our approach could assist.

\paragraph{Verifiable Computation.} Verifiable computation is a key research area essential for validating the correctness of outsourced computations, particularly in cloud services. It focuses on verifying third-party computations without needing to re-execute them fully. This field encompasses a blend of cryptographic and non-cryptographic approaches to ensure computational integrity and data authenticity. Cryptographic methods include Zero-Knowledge Proofs~\cite{yang2022non,mouris2021zilch,wahby2018doubly}, Interactive Proofs (IP)~\cite{baelde2021interactive}, and Homomorphic Encryption (HE)~\cite{kim2023asymptotically}, which offer robust security guarantees. In contrast, non-cryptographic methods, notably Authenticated Data Structures (ADS)~\cite{sun2020adaptive}, concentrate on maintaining data integrity. Three primary streams dominate this field: ADS, which ensures data authenticity; IP, which facilitates interactive validation; and zk-SNARKs~\cite{groth2016size,wahby2018doubly}, renowned for efficient, non-interactive proof generation. These diverse methods highlight the field's complexity and richness, underpinning its significance in areas ranging from cloud computing to privacy-preserving protocols. This comprehensive approach to verifying computational processes without complete re-execution sets the foundation for advanced research and practical applications in computational integrity.

\paragraph{Verifiable Machine Learning.} The integration of verifiable computation with machine learning (ML) stands as a transformative advancement, poised to redefine data security and integrity in AI-driven applications. This convergence is not only enhancing the trustworthiness of ML models but is also unlocking new avenues for secure and private data analysis, particularly in sensitive sectors like healthcare, finance, and government. Traditional ML models often operate as black boxes, with limited ability to verify the internal processes and outputs. The advent of verifiable machine learning aims to bring transparency and reliability into this domain. Homomorphic encryption (HE)~\cite{li2020privacy} and garbled circuits (GC)~\cite{jawurek2013zero} are prevalent methods in this sphere, enabling secure computation of encrypted data. While these methods offer robust privacy guarantees, they often come with computational overheads. Another trend is the use of Zero-Knowledge Proofs in ML~\cite{weng2021mystique,xing2023zero}, which allows the validation of model predictions without revealing the underlying data or model parameters. This is particularly useful in scenarios where data confidentiality is paramount. Current research integrating ZKP has primarily concentrated on specific algorithms, including CNN~\cite{liu2021zkcnn,lee2024vcnn}, DNN~\cite{ghodsi2017safetynets}, linear regression, logistic regression, neural networks, support vector machines, K-means~\cite{zhao2021veriml}, and decision trees~\cite{zhang2020zero}.

\noindent\textbf{Shortcomings:} Despite the promising potential of ZKP for enhancing privacy, their application in Reinforcement Learning (RL) is underexplored. This is mainly due to RL's complexities and high computational demands in dynamic environments. We focus on the MAB problem, a prevalent model in critical domains like healthcare and recommendation systems, where maintaining data integrity is of paramount importance. We aim to apply ZKP's privacy-preserving capabilities to the UCB-based MAB framework, achieving a trustworthy and confidential decision process.

\section{Conclusion}
This study presents a significant advancement in the integration of reinforcement learning with data privacy through the development of the zkUCB algorithm. By integrating zk-SNARKs within the UCB algorithm, zkUCB ensures the confidentiality and security of sensitive data and parameters, while facilitating a transparent decision-making process. Our comprehensive experimental analysis confirms the superiority of zkUCB over the standard UCB algorithm, particularly in terms of reward optimization and operational efficiency. This improvement is because appropriate quantization bits can effectively reduce information entropy in the decision-making process, enhancing rewards. In addition, the scalability and practicality of zkUCB are demonstrated through its linear proof size and verification times, underlining its potential applicability in a wide range of real-world scenarios.

\section*{Acknowledgements}
This work is supported in part by the Shanghai Science and Technology Innovation Action Plan 23511100400, in part by the 2023-2024 Open Project of Key Laboratory Ministry of Industry and Information Technology-Blockchain Technology and Data Security 20242216, in part by the Data+ program: TRACK \& TRACE – Traceable Pharmaceutical Products project and PAPRiCaS: Programming technology foundations for Accountability, Privacy-by-design \& Robustness in Context-aware Systems by Independent Research Fund Denmark.


\bibliographystyle{named}
\bibliography{ijcai24}

\end{document}